\newtheorem{definition}{Definition}[section]
\newtheorem{example}{Example}[section]
\newtheorem{principle}{Principle}[section]
\newtheorem{proposition}{Proposition}[section]
\newcommand{\goal}{\theta} 
\def\hb{\hbox to 11.5 cm{}}
\begin{document}

\pagestyle{headings}
\def\thepage{}
\begin{frontmatter}              

\title{Dispute resolution in legal mediation with quantitative argumentation}

\markboth{}{August 2024\hb}

\author{\fnms{Xiao} \snm{Chi}
\thanks{E-mail: cx3506@outlook.com .}}

\runningauthor{X. Chi}
\address{Zhejiang University}

\begin{abstract}
Mediation is often treated as an extension of negotiation, without taking into account the unique role that norms and facts play in legal mediation. Additionally, current approaches for updating argument acceptability in response to changing variables frequently require the introduction of new arguments or the removal of existing ones, which can be inefficient and cumbersome in decision-making processes within legal disputes. In this paper, our contribution is two-fold. First, we introduce a QuAM (Quantitative Argumentation Mediate) framework, which integrates the parties' knowledge and the mediator's knowledge, including facts and legal norms, when determining the acceptability of a mediation goal. Second, we develop a new formalism to model the relationship between the acceptability of a goal argument and the values assigned to a variable associated with the argument. 
We use a real-world legal mediation as a running example to illustrate our approach.
\end{abstract}

\begin{keyword}
Legal mediation\sep Dispute resolution\sep Quantitative argumentation\sep Consensus
\end{keyword}
\end{frontmatter}
\markboth{August 2024\hb}{August 2024\hb}

\section{Introduction}

Dispute resolution plays an important role in modern society.  
Current research primarily concentrates on formal mechanisms of dispute resolution, including negotiation, mediation, arbitration, and litigation \cite{r23}. 
These mechanisms involve consensus issue of varying degrees \cite{r1}. 
In this work, we aim to model consensual mechanisms of dispute resolution in mediation. Unlike other mechanisms where a third party is involved,
consensus in mediation takes a central role, and a mediator does not hold  final decision-making power in resolving a dispute. 
Mediation typically falls into three categories: informal, administrative, and judicial. This work focuses on informal mediation, which deals with civil disputes. 
Normally, the mediation of civil disputes is conducted within the framework of civil laws, and adheres to the principle of autonomy of will. This means that, if the parties are willing, the law will not intervene proactively unless public order and good morals are violated. 

A core issue in legal mediation is combining concerns and knowledge of both parties with the facts and legal norms discovered by a mediator to effectively conduct mediation and achieve desired outcomes. 
The mediator has two primary functions: aiding the parties in generating options and helping to break through impasses \cite{r14}. While the former has been extensively studied \cite{r12,r13}, the latter is less explored. Breaking through impasses often involves a process where both parties gradually compromising on an issue. Hence, a mechanism that can quantify the goals of both parties step by step is required. 

Abstract argumentation provides an effective way for reasoning about conflict \cite{r2} and reaching consensus in disputes \cite{r3,r4,r5}. 
Among them, quantitative argumentation provides a method to dynamically quantify the strengths of alternative decision options by recursively aggregating the strengths of their supporters and attackers \cite{r16}. However, existing work has not yet applied quantitative argumentation in mediation. This paper aims to formalize a dispute resolution mechanism based on the quantitative argumentation debate (QuAD) framework to assist mediators throughout the mediation process. There are several issues to be considered when instantiating the QuAD framework in legal mediation. Firstly, in mediation, the degree of relevance between arguments may vary. Therefore, in addition to weighing arguments, we also need to assign weights to the relations between them. Secondly, in legal mediation, there are several types of arguments, including opinion arguments, factual arguments, mandatory arguments and dispositive arguments. The characteristics of these types of arguments need to be considered during the aggregation process. Lastly, we need to develop a formalism to model the relationship between the degree of argument acceptability and the values assigned to variables associated with arguments. This will enable an efficient and straightforward comparison of whether an issue has been resolved after mediation between the two parties.

This paper is organized as follows. Section 2 introduces some notions of the QuAD framework. Section 3 presents a quantitative argumentation mediate (QuAM) mechanism for assessing the conflict between disputing parties.
Section 4 introduces some properties of the QuAM mechanism. Section 5 concludes the paper and discusses related work and future work.


\section{The QuAD framework}

In this section, we introduce some existing notions of QuAD framework that will be used in the paper. For more details of QuAD framework, the readers are referred to  \cite{r16}.

A QuAD framework is a 5-tuple $\langle \mathcal{A},\mathcal{C},\mathcal{P},\mathcal{R},\mathcal{BS} \rangle$, where $\mathcal{A}$ is a finite set of \textit{answer arguments}, $\mathcal{C}$ is a finite set of \textit{con-arguments}, $\mathcal{P}$ is a finite set of \textit{pro-arguments}, the sets $\mathcal{A}$, $\mathcal{C}$, and $\mathcal{P}$ are pairwise disjoint; $\mathcal{R} \subseteq (\mathcal{C} \cup \mathcal{P}) \times (\mathcal{A} \cup \mathcal{C} \cup \mathcal{P})$ is an acyclic binary relation, $\mathcal{BS}:(\mathcal{A} \cup \mathcal{C} \cup \mathcal{P}) \rightarrow \mathbb{I}$ is a total function for scale $\mathbb{I} = [0,1]$; $\mathcal{BS}(a)$ is the \textit{base score} of argument $a$. 

Automatic evaluation in a QuAD framework uses a \textit{score function} $\mathcal{SF}$, which assigns a final score to answer nodes. $\mathcal{SF}$ is defined recursively, employing a \textit{score operator} that combines the base score of a node with the final scores of its attackers and supporters. In a dynamic design context, sets of attackers and supporters are actually be given in sequence, which are arbitrary permutations of the attackers and supporters.
Let $(a_{1},...,a_{n})$ (resp. $(b_{1},...,b_{m})$) $(n, m \geq 0)$ be an arbitrary permutation of the attackers (resp. supporters) in $\mathcal{R}^{-}(a)$ (resp. $\mathcal{R}^{+}(a)$), where $\mathcal{R}^{-}(a) = \{b \in \mathcal{C} \mid (b,a) \in \mathcal{R}\}$ is a set of direct attackers of $a$, and $\mathcal{R}^{+}(a)  = \{b \in \mathcal{P} \mid (b,a) \in \mathcal{R}\}$ is a set of direct supporters of $a$. The corresponding sequence of final scores for $\mathcal{R}^{-}(a)$ and $\mathcal{R}^{+}(a)$ are $SEQ_{\mathcal{SF}}(\mathcal{R}^{-}(a)) = (\mathcal{SF}(a_{1}),..., \mathcal{SF}(a_{n}))$ and $SEQ_{\mathcal{SF}}(\mathcal{R}^{+}(a)) = (\mathcal{SF}(b_{1}),..., \mathcal{SF}(b_{m}))$. A generic score function for an argument $a$ can be given as 
$$\mathcal{SF}(a) = g(\mathcal{BS}(a), \mathcal{F}_{att}(\mathcal{BS}(a),SEQ_{\mathcal{SF}}(\mathcal{R}^{-}(a))),\mathcal{F}_{supp}(\mathcal{BS}(a),SEQ_{\mathcal{SF}}(\mathcal{R}^{+}(a)))).$$

To define $\mathcal{F}_{att}$ and $\mathcal{F}_{supp}$, the case of a single attacker (supporter) with a score $v \neq 0$ is firstly expressed in the following equations, based on the intuition that an attacker’s (supporter’s) contribution to the argument’s score decreases (increases) it by an amount proportional to both the attacker’s (supporter’s) score and the argument’s previous score, where $v_0$ is the base score:
$$
f_{att}(v_{0},v) = v_{0} - v_{0} \cdot v = v_{0} \cdot (1-v);\ f_{supp}(v_{0},v) = v_{0} +(1-v_{0}) \cdot v = v_{0} + v -v_{0} \cdot v.
$$
Further,  to deal with those sequences that are empty or consist of all zeros, a notion of ineffective sequence is introduced. A special value $nil \notin \mathbb{I}$ is defined to deal with ineffective sequences returned by $\mathcal{F}_{att}$ and $\mathcal{F}_{supp}$. 
Let $*$ be either `att' or `supp', for a non-ineffective sequence $S \in \mathbb{I}^{*}$, if $S=(v)$, $\mathcal{F}_{*}(v_{0},S) = f_{*}(v_{0},v)$; if $S = (v_{1},...,v_{n})$, $\mathcal{F}_{*}(v_{0},(v_{1},...,v_{n}))= f_{*}(\mathcal{F}_{*}(v_{0},(v_{1},...,v_{n-1})),v_{n})$. $\mathcal{F}_{*}$ produces the same result for any permutation of the same sequence.

The score function $\mathcal{SF}$ can be defined through operator $g: \mathbb{I} \times \mathbb{I} \cup \{nil\} \times \mathbb{I}\cup\{nil\} \rightarrow \mathbb{I}$. Specifically, $g(v_{0},v_{a},v_{s}) = v_{a} \ \text{if} \ v_{s} = nil \ and \ v_{a} \neq nil$; $g(v_{0},v_{a},v_{s}) = v_{s} \ \text{if} \ v_{a} = nil \ and \ v_{s} \neq nil$; $g(v_{0},v_{a},v_{s}) = v_{0} \ \text{if} \ v_{a}=v_{s} =nil$; $g(v_{0},v_{a},v_{s}) = \frac{v_{a}+v_{s}}{2} \ otherwise$.

\section{The QuAM mechanism}
Based on the notion of  QuAD framework,  in this section, we propose an argumentation mechanism to support a mediator in assessing the extent to which the conflict between disputing parties is reduced after presenting arguments, and in discovering the acceptable values for both parties. 

In a mediation dispute, two parties disagree on different values to be assigned to a variable, e.g.,  the payment of a sum from a party to another, the right to pass on a private road, the time of access to a resource, or the child custody in a divorce. The mediation goal is to identify a value acceptable to both parties. To realize this purpose, we first define the input of our mechanism, including a quantitative argumentation mediate (QuAM) framework presented by each party and a set of arguments proposed by the mediator. Next, we define an automatic evaluation method for calculating the degree of acceptability of the goal argument in a QuAM Framework. Finally, we introduce a function for mapping the acceptability degree of the goal argument to the value of a dispute variable, and propose principles for successful mediation.

\subsection{Input of the QuAM mechanism}

Based on the above description, our first input consists of two QuAM frameworks presented by two parties at stage $0$. Note that the mediator can only add one argument each time. We denote $m \in \mathbb{N}$ as the number of moves made by the mediator. The mediation is said to be at stage $m$ after $m$ moves, with the initial stage being stage $0$. Each framework represents arguments (including a goal argument, attackers and supporters) and the original degrees of acceptability assigned by a party to these arguments and relations between the arguments. We assume the goal argument remains fixed throughout the mediation process. For an argument that is influenced (influence target) by a set of other arguments (influencers), we quantify the relations between the influence target and the influencers to represent their relevance. The QuAM framework is defined as follows:

\begin{definition}
 A quantitative argumentation mediate (QuAM) framework is a 6-tuple \\ $\langle \theta, \mathcal{C},\mathcal{P},\mathcal{R}, \mathcal{BS}, \pi \rangle$ such that (for scale $\mathbb{I}=[0, 1]$) :
 \begin{itemize}
     \item $\goal$ is a primary goal argument that a party aims to achieve in the mediation;
     \item $\mathcal{C}$ is a finite set of con-arguments;
     \item $\mathcal{P}$ is a finite set of pro-arguments; \\
    The sets $\mathcal{C}$ and $\mathcal{P}$ are pairwise disjoint, and $\goal \notin \mathcal{C} \cup \mathcal{P}$;
    \item $\mathcal{R} \subseteq (\mathcal{C} \cup \mathcal{P}) \times (\{\goal\} \cup \mathcal{C} \cup \mathcal{P})$ is a set of acyclic binary relations;
    \item $\mathcal{BS}: (\{\goal\}\cup \mathcal{C} \cup \mathcal{P}) \rightarrow \mathbb{I}$ is a total function; $\mathcal{BS}(\textit{a})$ is the base score of argument $\textit{a}$, indicating the original acceptability of $a$ by a party.
    \item  $\pi: \mathcal{R} \rightarrow \mathbb{I}$ is a total function;
    $\pi((a,b))$ is the strength value on relation $(a,b) \in \mathcal{R}$, assigned by a party.
 \end{itemize} 
\end{definition}
Note that $\mathcal{BS}(\goal) =1$; otherwise, the party would change his goal.

We use a compensation dispute mediation case \cite{r18} as an example to illustrate the QuAM framework.

\begin{example}
    The two parties in this case are a customer, Zhang, and a supermarket. 
    While drawing back the curtains at the entrance of the supermarket, Zhang lost his balance and fell, resulting in a rib fracture. 
    The mediator, after separately discussing the specific situations with both parties, generated two QuAM frameworks at stage $0$ shown in Fig. \ref{fig}, where only arguments and their relations are shown.
    \begin{figure}
    \centering
    \includegraphics[height=0.8in, width=3in]{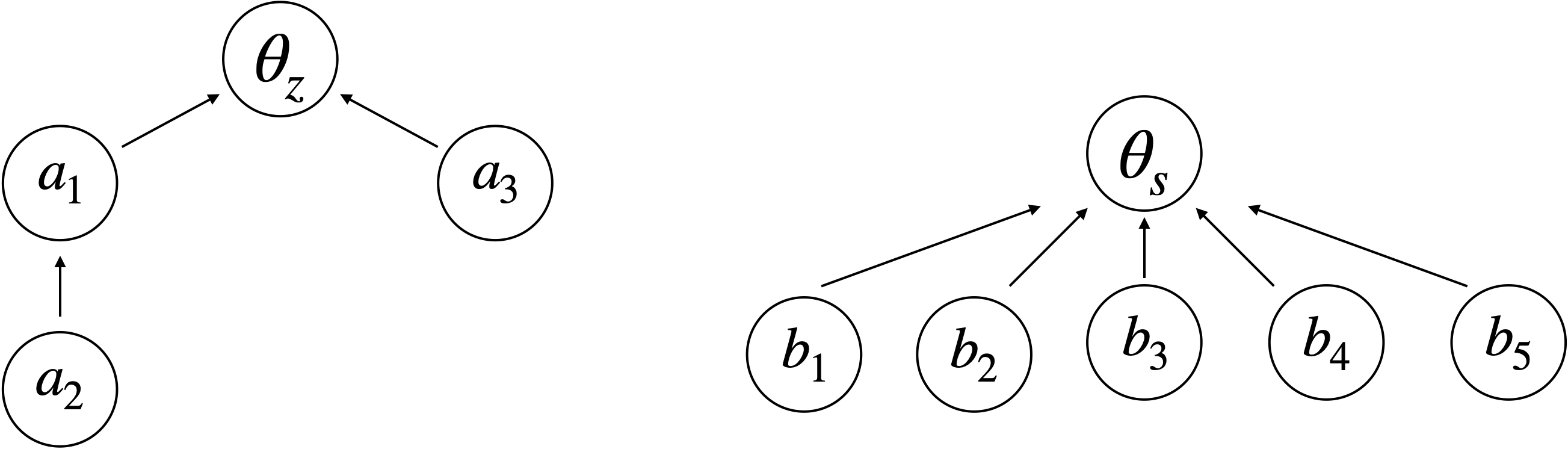}
    \caption{The QuAM framework for Zhang (on the left) and the supermarket (on the right).}
    \label{fig}
    \end{figure} 
    
    First, the arguments, relations, and related strengths proposed by Zhang are introduced. $\goal_{z}$: `The supermarket assumes all responsibility for compensation.', where $\mathcal{BS}(\goal_{z}) = 1$; $\mathcal{P} =\{a_{1}$: `The main reason for the fall was the wet floor.', $a_{2}$: `The supermarket didn't clean the floor in time.', $a_{3}$: `The entrance has no handrails to grab if someone falls.'$\}$, where $\mathcal{BS}(a_{1}) = 0.9$, $\mathcal{BS}(a_{2}) = 0.7$, $\mathcal{BS}(a_{3}) = 0.9$; $\mathcal{C} = \emptyset$; $\mathcal{R} =\{(a_{1},\goal_{z}),(a_{3},\goal_{z}),(a_{2},a_{1})\}$, where $\pi((a_{1},\goal_{z})) = 0.9$, $\pi((a_{3},\goal_{z})) = 0.5$, $\pi((a_{2},a_{1})) = 0.9$.
    
    Second, the arguments, relations, and related strengths proposed by the supermarket. $\goal_{s}$: `The supermarket assumes $20 \%$ responsibility for compensation.' where $\mathcal{BS}(\goal_{s}) = 1$; $\mathcal{P} =\{b_{1}$: `No need for a non-slip mat due to the good weather.', $b_{2}$: `A slippery floor sign at the entrance fulfilled the duty to provide a safety warning.', $b_{3}$: `Zhang is elderly, making it risky to go out alone, so his children have neglected their duty of care.', $b_{4}$: `Zhang did not fall inside the supermarket.', $b_{5}$ = `Zhang was not pushed by anyone in the supermarket.'$\}$, where $\mathcal{BS}(b_{1}) = 0.9$, $\mathcal{BS}(b_{2}) = 0.9$, $\mathcal{BS}(b_{3}) = 0.7$, $\mathcal{BS}(b_{4}) = 0.9$, and $\mathcal{BS}(b_{5}) = 0.7$; $\mathcal{C} = \emptyset$; $\mathcal{R} =\{(b_{1},\goal_{s}),(b_{2},\goal_{s}),(b_{3},\goal_{s}),(b_{4},\goal_{s}), (b_{5},\goal_{s})\}$, where $\pi((b_{1},\goal_{s})) = 0.5$, $\pi((b_{2},\goal_{s})) = 0.7$, $\pi((b_{3},\goal_{s})) = 0.7$, $\pi((b_{4},\goal_{s})) = 0.9$, $\pi((b_{5},\goal_{s})) = 0.4$.
    
    \label{ex1}
\end{example}

The second input of the QuAM mechanism is a set of arguments proposed by the mediator, which can be categorized into two categories. The first type includes arguments based on facts, and can be categorized into opinion arguments and factual arguments. These facts can be presented by the disputants and proven by the mediator based on evidence, or proposed by the mediator directly. Opinion arguments are generated from opinions based on facts, denoted as $A_{O}$, and factual arguments are the facts themselves, denoted as $A_{F}$. The second type includes arguments based on legal norms and facts. We categorize this type of arguments into mandatory arguments and dispositive arguments referring to public order and good morals, which is an important principle in the civil laws of countries around the world since modern times. As we mentioned in the introduction, the law will not intervene proactively in mediation unless public order and good morals are violated. To determine whether a certain legal act violates public order and good morals, the primary consideration is whether it violates the general moral values or the general interests of a society. Based on this standard, criteria can be established to determine actions that violate public order and good morals in civil legal activities, namely the harmfulness of legal acts and the invalidity of contracts. The harmfulness of legal acts mainly refers to behaviors that violate ethical principles, contradict the concept of justice, deprive or severely restrict personal freedom, and harm general social order. The invalidity of contracts includes those such as loan agreements made for gambling, contracts that use a person as collateral and restrict personal freedom, and contracts made after a divorce that severs the relationship between a child and their parents \cite{r19}. Mandatory arguments, denoted as $A_{NM}$, are those based on legal norms that fall within the scope of public order and good morals. In contrast, dispositive arguments, denoted as $A_{ND}$, are based on legal norms that do not fall within this scope, allowing the parties to decide whether and to what extent they accept these arguments. An overall persuasive argument set $A_{P}= A_{O} \cup A_{F} \cup A_{NM} \cup A_{ND}$ and $A_{O} \cap A_{F} \cap A_{NM} \cap A_{ND} = \emptyset$.
A persuasive argument set for party $a$ is denoted as $P_{a} \subseteq A_{P}$. In addition, we assume that the mediator knows the relations between persuasive arguments and arguments in a QuAM framework. New arguments might be raised by two parties in the remaining process of mediation. For simplicity, we do not consider such arguments in this paper.  
A continuation of Example \ref{ex1} is used to illustrate these ideas.

\begin{example}
    $P_{s} = \{p_{1},p_{2},p_{3}, p_{4},p_{5},p_{6}\}$ is the persuasive argument set for the supermarket, where $p_{1} = a_{2}$, $p_{2} = a_{3}$, $p_{3}:$ `The supermarket’s anti-slip sign is blocked.', $p_{4}:$ `Although the stairs are outside the supermarket, they are still part of its business premises.', $p_{5}:$ `The supermarket’s unloading operations made the floor slippery and Zhang’s injury is related to the slippery floor.', $p_{6}:$ `The supermarket’s failure to ensure safety makes it partly liable for compensation.'. $A_{O} = \{p_{4}, p_{5}\}$, $A_{F} = \{p_{1}, p_{2},p_{3}\}$, $A_{NM} = \{p_{6}\} \ and \ A_{ND}= \emptyset$. ${p_{6}}$ is derived from the legal norm, `Operators or managers of business premises and public places, ..., who fail to fulfill their duty of safety protection and thereby cause harm to others, shall assume tort liability'. This norm aims to protect public safety and social order, ensuring that operators and managers are responsible for the safety of the public and falls within the scope of public order and good morals. 

    An example for a dispositive argument is the con-argument, `a child may take the surname of either party', of the argument `according to custom, a child should take his father’s surname'. This con-argument is based on the norm, `a child may take either his father’s surname or his mother’s surname'. The norm grants autonomy to families rather than serving as a mandatory requirement for maintaining public order and good moral. Therefore, it does not fall within the scope of public order and good morals.
\label{ex2}
\end{example}

\subsection{Automatic evaluation in a QuAM framework}
The method for obtaining the final acceptability of an argument is based on the automatic evaluation in a QuAD framework \cite{r16}. The differences between the evaluation in QuAM and QuAD are 
additional considerations of the weight of relations and the characteristics of different types of arguments when quantifying the acceptability of an argument. 
We denote as $SEQ_{\pi}(\mathcal{R}^{-}(a)) = (\pi((a_{1},a)),...,\pi((a_{n},a)))$ and $SEQ_{\pi}(\mathcal{R}^{+}(a)) = (\pi((b_{1},a)),...,\pi((b_{m},a)))$ the corresponding sequence of the weight of relations. The generic score function for an argument $a$ becomes: 
\begin{align}
    &\mathcal{SF}(a) = g(\mathcal{BS}(a), \mathcal{F}_{att}(\mathcal{BS}(a), SEQ_{\pi}(\mathcal{R}^{-}(a)),SEQ_{\mathcal{SF}}(\mathcal{R}^{-}(a))),\\
    \nonumber
    &\quad \quad \quad \quad \mathcal{F}_{supp}(\mathcal{BS}(a),SEQ_{\pi}(\mathcal{R}^{+}(a)),SEQ_{\mathcal{SF}}(\mathcal{R}^{+}(a))))
\end{align}
The degree to which a supporter (resp. attacker) increases (resp. decreases) the acceptability of an argument is proportional to the influence of the supporter (resp. attacker) and inversely proportional (resp. proportional) to the argument’s previous acceptability. 
The influence of an influencer is proportional to its acceptability and the weight of the corresponding relation. In extreme cases, if a party does not accept an argument, even if it is 100\% relevant to the influence target, the target’s acceptability will remain unchanged. Conversely, if the argument is irrelevant to the target, even if the party fully accepts it, the target’s acceptability will not change. To satisfy the above, we can derive the following equations to calculate the acceptability of an argument that has only one supporter or attacker, where $v_{0}$ is the base score  of the argument, $\pi$ is the weight of the support (attack) relation, and $v$ is the acceptability degree of the supporter (attacker):
\begin{align}
   f_{supp}(v_{0},\pi, v) = v_{0} + (1-v_{0}) \cdot (\pi \cdot v); \quad
    f_{att}(v_{0},\pi, v) = v_{0} -v_{0} \cdot (\pi \cdot v) 
\end{align}

If $S=(v)$ and $\Pi=(\pi)$, $\mathcal{F}_{*}(v_{0},S)=f_{*}(v_{0},\pi,v)$; if $S=(v_{1},...,v_{n})$ and $\Pi = (\pi_{1},...,\pi_{n})$, $\mathcal{F}_{*}(v_{0},(\pi_{1},...,\pi_{n}),(v_{1},...,v_{n}))=f_{*}(\mathcal{F}_{*}(v_{0},(\pi_{1},...\pi_{n}),(v_{1},..,v_{n-1})),\pi_{n},v_{n}).$
Now we consider the characteristics of different types of arguments. As a premise, we assume that a rational person will always acknowledge the facts, and according to the core principles of civil law, informal mediation cannot violate public order and good morals. If a party is unwilling to comply with public order and good morals, they will be informed that mediation cannot resolve the dispute, and  be asked to seek assistance from the court, along with being informed of the potential consequence. Therefore, in this work, the base scores of $A_{F}$ and $A_{NM}$ are always equal to 1, and they do not have attackers.  Meanwhile, there is no conflict between mandatory arguments in the context of the same law or factual arguments. In the context of different laws, if conflicting norms lead to conflicting mandatory arguments, then the priority relation over the set of norms can be used to choose a set of conflict-free mandatory arguments \cite{r21}. The constraints of function $g$ in Formula $(1)$ are defined based on $A_{F}$ and $A_{NM}$.

\begin{itemize}
    \item[(C1)] If $b \in \mathcal{R}^{-}(a)$, $b \in A_{F} \cup A_{NM}$ and $\pi((b,a)) = 1$, then $\mathcal{SF}(a)=_{\mathrm{def}}0$;
    \item[(C2)] If $b \in \mathcal{R}^{+}(a)$, $b \in A_{F} \cup A_{NM}$ and $\pi((b,a)) = 1$, then $\mathcal{SF}(a) =_{\mathrm{def}} 1$.
\end{itemize}

\begin{proposition}
Let  $A_{F}$ and $A_{NM}$ be conflict-free sets. 
Given $\mathrm{C1}$ and $\mathrm{C2}$, if there exists $b \in \mathcal{R}^{-}(a)$, such that $b \in A_{F} \cup A_{NM}$ and $\pi((b,a)) = 1$, then there exists no $c \in \mathcal{R}^{+}(a)$, such that $c \in A_{F} \cup A_{NM}$, $\pi((c,a)) = 1$.
\end{proposition}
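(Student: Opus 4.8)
The plan is to argue by contradiction, using the single structural fact that $\mathcal{SF}$ is, by definition, a \emph{total function} on $\{\goal\}\cup\mathcal{C}\cup\mathcal{P}$, so it cannot assign two distinct values to the same argument $a$. First I would assume, towards a contradiction, that besides the hypothesised attacker $b\in\mathcal{R}^{-}(a)$ with $b\in A_{F}\cup A_{NM}$ and $\pi((b,a))=1$, there also exists a supporter $c\in\mathcal{R}^{+}(a)$ with $c\in A_{F}\cup A_{NM}$ and $\pi((c,a))=1$. I would record at the outset that $b\in\mathcal{C}$ and $c\in\mathcal{P}$, and since $\mathcal{C}\cap\mathcal{P}=\emptyset$ we have $b\neq c$, so the two definitional constraints below are genuinely about two separate arguments acting on $a$.

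Next I would invoke the constraints directly. The argument $b$ witnesses exactly the hypothesis of $\mathrm{C1}$, so that constraint forces $\mathcal{SF}(a)=_{\mathrm{def}}0$; the argument $c$ witnesses exactly the hypothesis of $\mathrm{C2}$, so that constraint forces $\mathcal{SF}(a)=_{\mathrm{def}}1$. Since $\mathcal{SF}(a)$ is a single element of $\mathbb{I}=[0,1]$, these two conclusions give $0=1$, which is absurd; hence no such $c$ can exist, which is the claim. The hypothesis that $A_{F}$ and $A_{NM}$ are conflict-free is what makes the statement substantive rather than vacuous: it closes off the escape route in which $b$ and $c$ attack one another (so that one of them might be discounted), and thereby locates the impossibility squarely in the joint incompatibility of $\mathrm{C1}$ and $\mathrm{C2}$.

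\textbf{Main obstacle.} There is no real computational content here; the one point that needs care is conceptual. I would want to be explicit that $\mathrm{C1}$ and $\mathrm{C2}$ are \emph{definitional stipulations} (written with $=_{\mathrm{def}}$), not previously derived equalities, so the proposition is best read as saying that $\mathrm{C1}$ and $\mathrm{C2}$ are jointly consistent only on those QuAM frameworks in which no argument simultaneously receives a weight-$1$ attack from $A_{F}\cup A_{NM}$ and a weight-$1$ support from $A_{F}\cup A_{NM}$. I would also double-check the boundary case $a\in A_{F}\cup A_{NM}$: by the standing assumption such arguments have no attackers, so $b$ could not exist there, and the problematic configuration simply does not arise — confirming that the contradiction argument above covers every case.
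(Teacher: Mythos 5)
Your proposal is correct and follows essentially the same route as the paper: assume a supporter $c$ as described exists, apply $\mathrm{C2}$ to get $\mathcal{SF}(a)=1$ and $\mathrm{C1}$ (via $b$) to get $\mathcal{SF}(a)=0$, and conclude by contradiction since $\mathcal{SF}$ assigns a single value. Your additional remarks (that $b\neq c$ since $\mathcal{C}\cap\mathcal{P}=\emptyset$, and that arguments in $A_{F}\cup A_{NM}$ have no attackers) are harmless elaborations of the same argument.
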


\begin{proof}
Assume that there exists $c \in \mathcal{R}^{+}(a)$, s.t. $c \in A_{F} \cup A_{NM}$, $\pi((c,a))=1$. Then according to C2, when $\mathcal{SF}(c) =1$, $\mathcal{SF}(a) =1$. According to C1, since $\pi((b,a))=1$ and $\mathcal{SF}(b) =1$, $\mathcal{SF}(a) =0$, contradicting $\mathcal{SF}(a) =1$.
\end{proof}


To specifically demonstrate the aggregation process for obtaining the final acceptability of an argument, we continue with Example \ref{ex2}.
\begin{example}
    The mediator put forward $p_{6}$ to persuade the supermarket. Since $p_{6}$ is a mandatory argument, $\mathcal{BS}(p_{6})= 1$. Assigned by the mediator, $\pi((p_{6},\goal_{s}))=0.5$.\\  
    $\mathcal{SF}(\goal_{s}) = g(1, \mathcal{F}_{att}(1,SEQ_{\pi}((p_{6})),SEQ_\mathcal{SF}((p_{6}))), \mathcal{F}_{supp}(1,SEQ_{\pi} ((b_{1},b_{2},b_{3},b_{4},b_{5})),\\SEQ_\mathcal{SF} ((b_{1},b_{2},b_{3},b_{4},b_{5}))))$;
    $\mathcal{SF}(b_{1}) = 0.9$; 
    $\mathcal{SF}(b_{2})=0.9$; 
    $\mathcal{SF}(b_{3})=0.7$;
    $\mathcal{SF}(b_{4})=0.9$; 
    $\mathcal{SF}(b_{5})= 0.7$; 
    $\mathcal{SF}(p_{6})= 1$;
    $\pi(b_{1},\goal_{s})= 0.5$;
    $\pi(b_{2},\goal_{s})= 0.7$;
    $\pi(b_{3},\goal_{s})= 0.7$;
    $\pi(b_{4},\goal_{s})= 0.9$;
    $\pi(b_{5},\goal_{s})= 0.4$.
    $\mathcal{F}_{supp} = 1$, $\mathcal{F}_{att} = 0.5$, and $\mathcal{SF}(\goal_{s})= 0.75$.
    \label{ex3}
\end{example}

\subsection{Transformation function}
In the previous section, we have introduced a method for calculating the acceptability of a goal argument in mediation. However, this acceptability cannot be directly used to assess the specific value of the variable associated with the goal argument. For instance, while the acceptability of the goal `full compensation' might decrease after being attacked by new arguments raised by the mediator, it doesn’t reveal the exact amount the party would accept. A common approach is to propose a new argument, for instance, `half the compensation', and redo the mediation process. This, however, can make the process cumbersome, as the mediator must rely on his subjective judgement to continually adjust goals until one is accepted. To address this, we aim to explore the relationship between a goal argument's acceptability and the variable associated with it in different mediation scenarios. The variable that two parties struggle with is called a decision variable. Each decision variable is associated with a set of possible values, and the value that a party aims to achieve is called a target value.
When categorizing decision variables, we follow the civil litigation classification, which divides cases into actions for confirmation, for performance, and for modification \cite{r20}. This classification is adopted because non-litigation dispute resolution relies on legitimate civil claims, which are typically reflected in civil litigation categories.
Accordingly, we classify the decision variables into binary and continuous variables, and provide transformation functions to map argument acceptability degrees to possible values, thereby simplifying the mediation process.


Binary variables are discrete, 
aligning with actions for confirmation and for modification. The two subcategories of binary variables are unilateral and joint variables. Binary unilateral variables ($BUV$) involve decisions made by a single party, e.g., whether to remove a fence to accommodate a neighbor. Binary joint variables ($BJV$) involve an indivisible property or right, requiring a decision on which party will take full holdership of the property or right, e.g., who has custody of the child. The transformation functions map an acceptability degree  to a possible value $v \in V$, where $V$ is a set of all possible values of a decision variable. For binary variables, $V = \{\mathbf{0}, \mathbf{1}\}$. For unilateral variables, $\mathbf{0}$ stands for `No' and $\mathbf{1}$ stands for `Yes'. For joint variables, $\mathbf{0}$ stands for ‘the other party’ and $\mathbf{1}$ stands for ‘the party itself’. Since there are only two contradictory values, a decrease in the acceptability of one value will lead to an increase in the acceptability of the other, and vice versa. We use $\goal^{\mathbf{0}}$ and $\goal^{\mathbf{1}}$ to represent the goal arguments corresponding to  values $\mathbf{0}$ and $\mathbf{1}$. For instance, concerning whether to remove a fence, for the party whose attitude is `Yes', the goal  argument of this party is denoted as $\goal^{\mathbf{1}}$. Note that both parties may have the same attitude. In this case, there is no dispute. 
\begin{definition}
    Let $k$ be a threshold when a party accepts neither $\mathbf{0}$ nor $\mathbf{1}$. The transformation functions of $BUV$ (the same for $BJV$) for $\goal^\mathbf{1}$ and $\goal^\mathbf{0}$,  denoted as $\tau_1$ and $\tau_0$ respectively, mapping each acceptability value to a value in $V$, are defined  as follows:
    \[
    \tau_{1}(\mathcal{SF}(\goal^\mathbf{1})) = 
    \left\{  
         \begin{array}{lr}  
           0 \ \ \ \ \ \ \ \ \    \mathcal{SF}(\goal^\mathbf{1}) \in [0, k)  \\   
           1 \ \ \ \ \ \ \ \ \    \mathcal{SF}(\goal^\mathbf{1}) \in (k,1]   
         \end{array}  
    \right.;  
    \quad 
    \tau_{0}(\mathcal{SF}(\goal^\mathbf{0})) = 
    \left\{  
         \begin{array}{lr}  
         1 \ \ \ \ \   \ \ \mathcal{SF}(\goal^\mathbf{0}) \in [0, k)  \\   
         0 \ \ \ \ \ \ \   \mathcal{SF}(\goal^\mathbf{0}) \in (k,1]     
         \end{array}  
    \right.  
    \]
\end{definition}

Obviously, we have the following proposition. 

\begin{proposition}
Both $\tau_1$ and $\tau_0$ are monotonic. 
\end{proposition}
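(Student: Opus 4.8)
The claim is that the two transformation functions $\tau_1$ and $\tau_0$ from Definition~3.2 are monotonic. The plan is to argue directly from the piecewise definitions, treating the two functions separately, and being careful about what ``monotonic'' means here: $\tau_1$ should turn out to be (weakly) monotonically \emph{increasing} in $\mathcal{SF}(\goal^{\mathbf 1})$, while $\tau_0$ should turn out to be (weakly) monotonically \emph{decreasing} in $\mathcal{SF}(\goal^{\mathbf 0})$, since a higher acceptability of the ``No'' goal $\goal^{\mathbf 0}$ pushes the assigned value toward $\mathbf 0$. So the first step is to state precisely which direction of monotonicity is intended for each function (or, alternatively, to observe that each is monotonic in the sense of being order-preserving or order-reversing, with a fixed direction on its whole domain).

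First I would handle $\tau_1$. Take any two acceptability values $x, y$ in the domain $[0,k)\cup(k,1]$ with $x \le y$. There are three cases: both lie in $[0,k)$, both lie in $(k,1]$, or $x \in [0,k)$ and $y \in (k,1]$ (the case $x \in (k,1]$, $y \in [0,k)$ is impossible since that would force $x > k > y \ge x$). In the first two cases $\tau_1(x) = \tau_1(y)$ (both $0$, or both $1$), so the inequality $\tau_1(x) \le \tau_1(y)$ holds trivially. In the third case $\tau_1(x) = 0 \le 1 = \tau_1(y)$. Hence $\tau_1$ is weakly increasing. The argument for $\tau_0$ is the mirror image: with $x \le y$, the only nontrivial case is $x \in [0,k)$, $y \in (k,1]$, where $\tau_0(x) = 1 \ge 0 = \tau_0(y)$, and in the other cases $\tau_0(x) = \tau_0(y)$; hence $\tau_0$ is weakly decreasing. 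Combining, both functions are monotonic.

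The only real subtlety — and the step I expect to need the most care — is the treatment of the threshold point $k$ itself and the fact that the domain of each function, as written, is $[0,k)\cup(k,1]$ rather than all of $[0,1]$: the value $\mathcal{SF}(\goal^{\mathbf 1}) = k$ is deliberately excluded (it is the case where the party accepts neither $\mathbf 0$ nor $\mathbf 1$). So I would note explicitly that monotonicity is asserted on this restricted domain, which is a totally ordered subset of $[0,1]$, and that the excluded point does not create any difficulty because there is simply no pair $(x,y)$ with $x\le y$ straddling $k$ in a way that is not already covered by the ``$x \in [0,k)$, $y \in (k,1]$'' case. Everything else is an immediate case check with no computation, which is why the paper prefaces the statement with ``Obviously''.
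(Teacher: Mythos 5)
Your proof is correct. The paper itself gives no proof of this proposition, dismissing it with ``Obviously,'' and your case analysis (showing $\tau_1$ is weakly increasing and $\tau_0$ is weakly decreasing on the domain $[0,k)\cup(k,1]$, with the threshold $k$ excluded) is exactly the routine argument the paper is implicitly relying on; your explicit note about the direction of monotonicity for each function and the excluded point $k$ is a reasonable clarification rather than a deviation.
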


Continuous variables are related to liability or damages, aligning with actions for performance, e.g., compensation amounts can be adjusted within a certain range. These variables can also be classified as either unilateral or joint. In this case, $V = [0,1]$, where $0$ stands for `0 \%' and $1$ stands for `100 \%'. Continuous unilateral variables ($CUV$) refer to one party providing divisible property or resources to the other, e.g., how much compensation. 

\begin{definition}
    Let $x, y\in V$ be the target values of a payer and a payee respectively. Let $k_{payer}$ be the degree of the acceptability of a payer when he needs to give 100\%, and $k_{payee}$ be the degree of the acceptability of a payee when he gets 0\%. The transformation functions of $CUV$ for the payer and the payee respectively are defined as follows:
    \[
    \mu_{payer}: [k_{payer},1] \rightarrow [x,1];
    \quad
    \mu_{payee}: [k_{payee},1] \rightarrow [0,y]
    \]
\end{definition}

Continuous joint variables ($CJV$) involve both parties  sharing the use or benefits of the same property or resource, i.e., shared profits from a resource. 
\begin{definition}
    Let $x,y\in V$ be the target values of party 1 and party 2 respectively. Let $k_{p1}$ and $k_{p2}$ be the degrees of the acceptability of the two parties when they get 0\%. The transformation functions of $CJV$ for the two parties respectively are defined as follows:
    \[
    \mu_{p1}: [k_{p1},1] \rightarrow [0,x];
    \quad
    \mu_{p2}: [k_{p2},1] \rightarrow [0,y]
    \]
\end{definition}
Since functions for $CUV$ and $CJV$ are domain-dependent, we propose the following guiding principles they should follow.
\begin{principle}
Let $\alpha$ and $\beta$ be the acceptability degrees of a goal argument at different stages. If $\alpha \leq \beta$, then $\mu_{payer}(\alpha) \geq \mu_{payer}(\beta)$, $\mu_{payee}(\alpha) \leq \mu_{payee}(\beta)$, $\mu_{p1}(\alpha) \leq \mu_{p1}(\beta)$ and $\mu_{p2}(\alpha) \leq \mu_{p2}(\beta)$.

\end{principle}



\begin{principle}
    If $x$ and $y$ are the target values of a payer and a payee respectively, then $x < y$; if $x$ and $y$ are the target values of party 1 and party 2 respectively, then $x+y>1$.
\end{principle}

Now we put forward conditions for reaching a consensus in the following four situations. For $BUV$, both parties must achieve the same outcome; for $BJV$, they must achieve opposite outcomes; for $CUV$, the amount the payer wants to pay must be greater than or equal to the amount requested by the payee; and for $CJV$, the total amount desired by both parties cannot exceed the available property or resources. 
\begin{definition}
    Let $\theta_{1}$ and $\theta_{2}$ be the goal arguments for the two parties respectively. In legal mediation, two parties reach consensus if and only if: 
    \begin{itemize}
        \item $\tau_{1}(\mathcal{SF}(\theta_{1})) = \tau_{0}(\mathcal{SF}(\theta_{2}))$ for $BUV$
        \item $\tau_{1}(\mathcal{SF}(\theta_{1})) = 1 - \tau_{0}(\mathcal{SF}(\theta_{2}))$ for $BJV$
        \item $\mu_{payer}(\mathcal{SF}(\theta_{1})) \geq \mu_{payee}(\mathcal{SF}(\theta_{2}))$ for $CUV$
        \item $\mu_{p1}(\mathcal{SF}(\theta_{1})) + \mu_{p2}(\mathcal{SF}(\theta_{2})) \leq 1$ for $CJV$
    \end{itemize}

\end{definition}

\begin{example}
    Followed by Example \ref{ex3}, a possible transformation function can be $\mu_{payer}(\alpha) = -0.8 \alpha + 1$ with $\alpha \in [0,1]$ and $\mu_{payer}(\alpha) \in [0.2,1]$. $\mu_{payer}(\mathcal{SF}(\goal_{s}))=\mu_{payer}(0.75)=0.4 \leq \mu_{payee}(\mathcal{SF}(\goal_{z}))=1$, the mediator still needs to continue mediation.
\end{example}

\section{Properties}
Since the QuAM framework is defined under the QuAD framework, all the properties proposed in \cite{r16} are satisfied. In this section, we propose some properties unique to the QuAM mechanism. Before stating the properties, we introduce functions measuring the conflict between two parties.
\begin{definition}
    The distance between goals of two parties $\goal_{1}$ and $\goal_{2}$. 
    \begin{itemize}
        \item $BUV$: $\mathcal Dist$$(\goal_{1},\goal_{2})= \left| \tau_{1}(\mathcal{SF}(\goal_{1})) - \tau_{0}(\mathcal{SF}(\goal_{2})) \right|$.
        \item $BJV$: $\mathcal Dist$$(\goal_{1},\goal_{2})= 1- \left| \tau_{1}(\mathcal{SF}(\goal_{2})) - \tau_{0}(\mathcal{SF}(\goal_{1})) \right|$.
        \item $CUV$: $\mathcal Dist$$(\goal_{1},\goal_{2})= \mu_{payee}(\mathcal{SF}(\goal_{2}))-\mu_{payer}(\mathcal{SF}(\goal_{1}))$ for $\mu_{payee} \geq \mu_{payer}$; \\ otherwise, $\mathcal Dist$$=0$.
        \item $CJV$: $\mathcal Dist$$(\goal_{1},\goal_{2})= (\mu_{p1}(\mathcal{SF}(\goal_{1}))+\mu_{p2}(\mathcal{SF}(\goal_{2})))-1$ for $(\mu_{p1}(\mathcal{SF}(\goal_{1}))+\mu_{p2}(\mathcal{SF}(\goal_{2}))) \geq 1$; otherwise, $\mathcal Dist$$=0$
    \end{itemize}
   \label{def4}
\end{definition}

\begin{proposition}
Let$\langle\theta,\emptyset,\mathcal{P},\mathcal{R},\mathcal{BS},\pi\rangle$ be a QuAM framework for a party, $a$ be an argument put forward by a mediator to persuade the party, and $\theta'$ be a goal argument of another party. If $\mathcal Dist$$(\goal,\theta^\prime)$ decreases, then $a \in \mathcal{R}^{-}(\goal)$.
\end{proposition}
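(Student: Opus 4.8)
The plan is to prove the contrapositive: assuming $a \notin \mathcal{R}^{-}(\goal)$ in the framework produced by the mediator's move, I will show that $\mathcal{D}ist(\goal,\theta')$ does not change, and so in particular does not decrease. The first step is to compute $\mathcal{SF}(\goal)$ in the given framework, before $a$ is added. Since that framework has $\mathcal{C}=\emptyset$, the set $\mathcal{R}^{-}(\goal)$ is empty, the attacker sequence of $\goal$ is empty, and hence $\mathcal{F}_{att}(\mathcal{BS}(\goal),\ldots)=nil$. Because $\mathcal{BS}(\goal)=1$, the identity $f_{supp}(1,\pi,v)=1+(1-1)\cdot(\pi v)=1$ together with the recursive definition of $\mathcal{F}_{supp}$ forces $\mathcal{F}_{supp}(1,\ldots)\in\{1,nil\}$; by the clauses of $g$ with $v_a=nil$ (which return $v_s$ when $v_s\neq nil$ and return $v_0=1$ when $v_s=nil$) we get $\mathcal{SF}(\goal)=1$ in either sub-case. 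Constraints C1 and C2 cannot lower this: C1 is vacuous as $\goal$ has no attacker, and C2 also yields $1$.

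Next I would analyse the move under the hypothesis $a \notin \mathcal{R}^{-}(\goal)$. Adding $a$ only inserts one new argument together with its relation, and by hypothesis that relation is not an attack on $\goal$ by a con-argument; hence $\goal$ still has no direct attacker afterwards. Repeating the computation of the previous paragraph verbatim --- it used only $\mathcal{BS}(\goal)=1$ and the absence of attackers of $\goal$ --- we still have $\mathcal{SF}(\goal)=1$. In particular it is immaterial whether $a$ was attached as a direct supporter of $\goal$ or somewhere deeper in the pro-structure: with base score $1$ the support aggregate of $\goal$ is always $1$ or $nil$, and $g$ returns $1$ either way. Moreover, $\theta'$ lives in the other party's framework, which the mediator leaves untouched when putting forward $a$ to persuade the party whose framework is $\langle\theta,\emptyset,\mathcal{P},\mathcal{R},\mathcal{BS},\pi\rangle$; hence $\mathcal{SF}(\theta')$ is unchanged too.

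Finally, by Definition \ref{def4}, in each of the four cases $BUV$, $BJV$, $CUV$, $CJV$ the quantity $\mathcal{D}ist(\goal,\theta')$ is a fixed function of the pair $(\mathcal{SF}(\goal),\mathcal{SF}(\theta'))$ alone, the transformation functions, their thresholds, and the variable type all being fixed throughout the mediation. Since neither coordinate changes when $a\notin\mathcal{R}^{-}(\goal)$, $\mathcal{D}ist(\goal,\theta')$ is unchanged, hence does not decrease; this is precisely the contrapositive of the proposition. The only point that needs real care is the claim in the second paragraph that $\mathcal{SF}(\goal)$ is genuinely insensitive to where in the supporter structure $a$ is attached, which relies on $f_{supp}(1,\pi,v)=1$ and the $nil$-handling of $g$ (and is also where the hypothesis $\mathcal{C}=\emptyset$ is essential, since without it an attack inserted on a supporter of $\goal$ could shift $\mathcal{SF}(\goal)$ through the $nil$ branch of $g$); the rest is routine bookkeeping.
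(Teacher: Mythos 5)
Your proof is correct and follows essentially the same route as the paper's: both argue the contrapositive, use $\mathcal{BS}(\goal)=1$ together with the absence of attackers of $\goal$ (from $\mathcal{C}=\emptyset$ and $a\notin\mathcal{R}^{-}(\goal)$) to force $\mathcal{SF}(\goal)=1$ before and after the move, and note that $\theta'$ is untouched so $\mathcal{D}ist(\goal,\theta')$ is unchanged. Your treatment is somewhat more careful than the paper's about the $nil$ branches of $g$ and about why the attachment point of $a$ within the pro-structure is immaterial, but the underlying argument is the same.
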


\begin{proof}
    If $a \notin \mathcal{R}^{-}(\theta)$, then $a\in \mathcal{R}^{+}(p)$ where $p \in \{\goal\} \cup \mathcal{P}$ or $a \in \mathcal{R}^{-}(c)$ where $c \in \mathcal{P}$. If $a \in \mathcal{R}^{+}(p)$, since $\mathcal{BS}(\theta)=1$, $\mathcal{SF}(\goal) = \mathcal{F}_{supp}(1,\Pi,S) = 1 = \mathcal{BS}(\theta)$. $\mathcal{SF}(\goal')$ remains unchanged as it has no additional influencers, and $\mathcal Dist$$(\goal,\theta')$ remains unchanged. If $a \in \mathcal{R}^{-}(c)$, $\mathcal{SF}(c) \in [0,1]$, i.e. attacking $c$ can at most cause it to have no influence on a goal argument but cannot change $c$ from a supporter to an attacker. Therefore, $\mathcal{SF}(\goal) = \mathcal{F}_{supp}(1,\Pi,S) = 1$ remains unchanged and $\mathcal Dist$$(\goal,\theta')$ remains unchanged.
\end{proof}
\begin{proposition}
    If $\theta$ is a goal argument, and the value associated with $\theta$ at stage 0 is still its target value. Let $a$ be an argument proposed by a mediator to persuade the party, where $a \in A_{F} \cup A_{NM}$ with $a \in R^{-}(\goal)$, and $\pi((a,\theta))=1$, then without persuading the other party whose goal argument is $\goal'$, $\mathcal Dist$$(\goal,\theta')=0$.
\end{proposition}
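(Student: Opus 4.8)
The plan is to use Condition (C1) together with the transformation‑function framework to show that the party's goal acceptability collapses, and that this collapse drives the distance to zero in every one of the four variable types. First I would observe that since $a \in A_F \cup A_{NM}$, $a \in \mathcal{R}^-(\goal)$, and $\pi((a,\goal)) = 1$, Condition (C1) applies directly and yields $\mathcal{SF}(\goal) =_{\mathrm{def}} 0$. (Here I would also note that $\mathcal{SF}(a) = 1$ because base scores of $A_F$ and $A_{NM}$ arguments are always $1$ and such arguments have no attackers, so the hypothesis of C1 is genuinely triggered.) Thus after this single mediator move the party's goal has acceptability $0$.

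Next I would feed $\mathcal{SF}(\goal) = 0$ into the relevant transformation function and read off the induced value of the decision variable. For $BUV$ and $BJV$, the party's goal is some $\goal^{\mathbf 1}$ or $\goal^{\mathbf 0}$; since $0 \in [0,k)$, Definition~3.2 gives $\tau_1(0) = 0$ and $\tau_0(0) = 1$, i.e. the party is pushed entirely off its target value and onto the opposite one. For $CUV$ and $CJV$, I would invoke Principle~3.1: since $0$ is the least possible acceptability degree, monotonicity forces $\mu_{payer}(0)$ to be maximal, $\mu_{payee}(0)$, $\mu_{p1}(0)$, $\mu_{p2}(0)$ to be minimal — so a payer is willing to give $100\%$, a payee demands $0\%$, and a joint party claims $0\%$. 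In each case the party has conceded as far as the model allows.

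Then I would plug these extreme values into Definition~4.1 (the $\mathcal{Dist}$ definitions), holding $\goal'$ fixed since the other party is not persuaded. For $BUV$: whatever $\tau_0(\mathcal{SF}(\goal'))$ is, the party whose goal collapsed now contributes the value that makes $\mathcal{Dist} = |\tau_1(\mathcal{SF}(\goal)) - \tau_0(\mathcal{SF}(\goal'))|$ — here I need the hypothesis that the stage‑0 value was the target value, which pins down who is payer/payee / which of $\goal^{\mathbf 0},\goal^{\mathbf 1}$ the collapsed goal is, so that the collapse moves the term in the direction that matches $\goal'$; a short case check on the two possible attitudes closes it. For $BJV$ the $1 - |\cdots|$ form works symmetrically. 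For $CUV$, $\mu_{payer}(\mathcal{SF}(\goal))$ becomes $1 \geq \mu_{payee}(\mathcal{SF}(\goal'))$, so the ``otherwise'' clause gives $\mathcal{Dist} = 0$; dually if the collapsed party is the payee, $\mu_{payee}(0) = 0 \leq \mu_{payer}(\mathcal{SF}(\goal'))$, again $\mathcal{Dist}=0$. For $CJV$, the collapsed party contributes $0$ to the sum, so $\mu_{p1} + \mu_{p2} \leq 1$ (as the other term lies in $[0,1]$), and the ``otherwise'' clause gives $\mathcal{Dist} = 0$.

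The main obstacle I anticipate is the bookkeeping in the binary cases: the statement says ``the value associated with $\goal$ at stage 0 is still its target value,'' and I must use this to identify which concrete goal argument ($\goal^{\mathbf 0}$ vs.\ $\goal^{\mathbf 1}$, or payer vs.\ payee role) the collapsed goal corresponds to, and then verify that driving its acceptability to $0$ actually aligns its transformed value with the other party's rather than merely flipping it arbitrarily. This is a finite case analysis over the two attitudes and the four variable types, but it is where the hypothesis is essential and where an incautious argument could go wrong; the continuous cases, by contrast, fall out cleanly from Principle~3.1 plus the ``otherwise $\mathcal{Dist}=0$'' clauses.
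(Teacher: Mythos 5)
Your proposal is correct and follows essentially the same route as the paper's proof: apply C1 to drive $\mathcal{SF}(\theta)$ to $0$ while $\mathcal{SF}(\theta')$ stays fixed, then case-split over the four variable types (using the stage-0 ``target value'' hypothesis to pin down the direction of the binary cases) and read off $\mathcal{D}ist = 0$ from Definition 4.1. The only cosmetic difference is that you obtain $\mu_{payer}(0)=1$ and $\mu_{p1}(0)=0$ via the monotonicity principle, whereas the paper reads these endpoint values directly off Definitions 3.3 and 3.4.
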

\begin{proof}
     Let $\mathcal{SF}_{m}(\theta)$ and $\mathcal{SF}_{m}(\theta')$ be the acceptability of goal arguments $\theta$ and $\theta'$ at stage $m$ respectively. According to constraint C1 of $g$, at stage 1, $\mathcal{SF}_{1}(\goal)=0$. Since $\theta'$ is not being influenced at stage 1, $\mathcal{SF}_{1}(\theta')=\mathcal{SF}_{0}(\theta')$.
     For $BUV$, if $\theta = \theta^{\mathbf{1}}$ and $\theta' = \theta^{\mathbf{0}}$ (similar for the opposite), according to Definition 3.5, at stage 0, $\tau_{1}(\mathcal{SF}_{0}(\theta)) = 1- \tau_{0}(\mathcal{SF}_{0}(\theta')) \neq \tau_{0}(\mathcal{SF}_{0}(\theta'))$. According to Definition 3.2, since the value associated with $\theta$ remains unchanged at stage 0,
     $\mathcal{SF}_{0}(\theta) \in (k,1]$. Since $\mathcal{SF}_{1}(\goal)=0$, $\tau_{1}(\mathcal{SF}_{1}(\theta)) = 1- \tau_{1}(\mathcal{SF}_{0}(\theta))=\tau_{0}(\mathcal{SF}_{1}(\theta'))$. Therefore, $\mathcal Dist$$(\goal,\theta')= 0$ according to Definition 4.1. Similar for $BJV$, $\tau_{1}(\mathcal{SF}_{0}(\theta)) = \tau_{0}(\mathcal{SF}_{0}(\theta'))$ and $\tau_{1}(\mathcal{SF}_{1}(\theta)) = 1- \tau_{1}(\mathcal{SF}_{0}(\theta))\neq \tau_{0}(\mathcal{SF}_{1}(\theta'))$. Therefore, $\mathcal Dist$$(\goal,\theta')= 0$. For $CUV$, when $\theta$ is the goal of a payer, $\mu_{payer}(\mathcal{SF}_{0}(\theta)) < \mu_{payee}(\mathcal{SF}_{0}(\theta'))$. According to Definition 3.3, $\mu_{payer}(\mathcal{SF}_{1}(\theta)) = \mu_{payer}(0)= 1$. Therefore $\mu_{payer}(\mathcal{SF}_{1}(\theta)) \geq \mu_{payee}(\mathcal{SF}_{1}(\theta'))$, and $\mathcal Dist$$(\goal,\theta') = 0$. Similar proof for $\theta$ being the goal of a payee. For $CJV$, $\mu_{p1}(\mathcal{SF}_{0}(\theta)) + \mu_{p2}(\mathcal{SF}_{0}(\theta')) > 1$. According to Definition 3.4, $\mu_{p1}(\mathcal{SF}_{1}(\theta))=\mu_{p1}(0)=0$. Therefore $\mu_{p1}(\mathcal{SF}_{1}(\theta)) + \mu_{p2}(\mathcal{SF}_{1}(\theta')) \leq 1$, and $\mathcal Dist$$(\goal,\theta') = 0$.
\end{proof}



\section{Conclusions and future work}
In this paper, we have proposed a QuAM mechanism to support a mediator in assessing the extent to which the conflict between the disputing parties is reduced after presenting arguments, and in discovering the acceptable values of the goal arguments for both parties. The main contributions of this paper are two-fold. First, we introduce a QuAM framework for determining the acceptability of a mediation goal. Second, we develop a formalism to model the relationship between the acceptability of a goal argument and the values of a decision variable.

There are already some works on computational mediation. Two related works are \cite{r12,r13}. Sierra et al. \cite{r12} introduces a negotiation mediator that combines argumentation and case-based reasoning. The mediator integrates the argumentation systems of both parties through a merging argumentation framework, and generates and justifies solutions to help the negotiating parties, who have failed to reach an agreement, to achieve consensus. Tresca et. al. \cite{r13} presents an argumentation-based mediation model that uses the mediator's knowledge and resources to facilitate the resolution of disputes between negotiating parties. These two works are extensions of negotiation, with the main goal of proposing solutions acceptable for both parties. They didn't consider the process of reaching consensus, making it difficult to assess changes in the parties' acceptance of the outcomes during mediation, and to measure the changes in the degree of conflict in dynamic process of mediation. Moreover, they did not consider the different roles that various arguments play in mediation, and the examples they used are simple cases that fall outside the scope of legal mediation, and therefore cannot be applied to legal mediation scenarios. 

As a new methodology for dispute resolution in legal mediation based on quantitative argumentation, there are some promising research topics worth to be further studied. First, we have not yet explored how mediators should present arguments to resolve disputes most efficiently. So, it is interesting to develop strategies for the mediation process to better assist a mediator in the process of mediation, based on the theory of strategic argumentation \cite{r22}. Second, specific functions of transformation and the thresholds  have not been specified and instantiated yet. Possible future work may include the specification of detailed transformation functions, and effective machine learning methods for learning thresholds. Third, it can be challenging to find suitable datasets. A possible way is to generate synthetic datasets.

\end{document}